%% file: main.tex
\pdfoutput=1 
\documentclass[11pt]{article}
\usepackage[font=libertinus, citestyle=numeric]{kurbanlab}
\input{affiliations}

\usepackage{colortbl}   
\usepackage{pifont}     

\definecolor{ink}{HTML}{1A2530}
\definecolor{natblue}{HTML}{0072B2}        
\definecolor{natvermillion}{HTML}{D55E00}  
\definecolor{natgreen}{HTML}{009E73}       
\definecolor{natorange}{HTML}{E69F00}      
\definecolor{natgray}{HTML}{6B7280}
\definecolor{panelblue}{HTML}{EAF3F9}
\definecolor{panelgray}{HTML}{F2F4F6}

\newcommand{\cyes}{\textcolor{natgreen}{\ding{51}}}
\newcommand{\cno}{\textcolor{natgray!65}{\ding{55}}}
\newcommand{\costhi}{\textcolor{natvermillion}{high}}
\newcommand{\costmid}{\textcolor{natorange!85!black}{medium}}
\newcommand{\costlo}{\textcolor{natgreen}{low}}
\newcommand{\headrow}{\rowcolor{panelgray}}
\newcommand{\ourrow}{\rowcolor{panelblue}}

\newcolumntype{Y}{>{\raggedright\arraybackslash}X}

\newcommand{\feat}{\phi}
\newcommand{\sAUC}{\mathrm{sAUC}}

\title{Detect Early, Escalate Rarely: Anytime Detection of AI-Generated Video
from the Compressed Bitstream}
\Subtitle{Streaming detection from the codec motion field, with an anytime guarantee
and a priced compute frontier}
\RunningTitle{Anytime AIGV detection from the compressed bitstream}

\Author{Mert Onur Cakiroglu}{iub}
\Author{Mehmet Dalkilic}{iub}
\Author[corresponding=hkurban@hbku.edu.qa]{Hasan Kurban}{hbku}

\Keywords{AI-generated video detection; streaming perception; compressed-domain video;
anytime-valid inference; cascade deferral; efficient inference}
\CodeURL{https://github.com/KurbanIntelligenceLab/streamdet}
\Venue{}

\begin{document}
\maketitle

\begin{abstract}
Detectors for AI-generated video are evaluated offline. A clip is decoded to pixels and scored
once, increasingly by a large vision-language model. Detection, however, is deployed online. We
recast the task as streaming perception and score the motion field the codec already wrote into
the bitstream. Reading that field is a parse, not a pixel-domain forward pass. Because the running
aggregate is monotone, one end-calibrated threshold is anytime-valid at the data-dependent
decision time. Recalibrating at each prefix is not. Escalation is priced in closed form. A compute
budget maps to a deferral window, on a frontier monotone exactly where the deferral condition
holds. On matched GenVidBench the codec stage reaches full-length AUC $0.64$ at five orders of
magnitude less compute than a pixel CNN, on CPU. Its gate holds the stopping-time false-positive
rate at target while the real data match its calibration, and drifts above it under distribution
shift. Deferring $15\%$ of clips lifts accuracy from $0.75$ to $0.78$ at $7\times$ less compute
(paired: McNemar $p{<}10^{-6}$). The stage-1 ordering replicates on AIGVDBench. We introduce no new
detector. The contribution is the reframing, two guarantees, and the measured frontiers.
Code, configurations, and evaluation splits:
\url{https://github.com/KurbanIntelligenceLab/streamdet}.
\end{abstract}

\printkeywords

\section{Introduction}
\label{sec:intro}

\begin{figure}[t!]
\centering
\kilgraphics[0.70\textwidth]{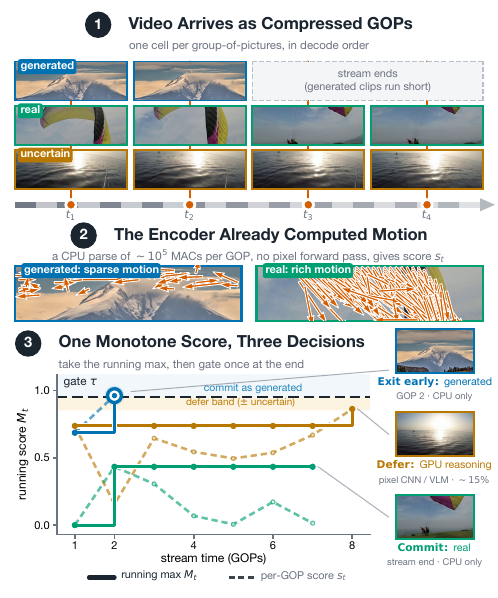}
\caption{\textbf{Detect early, escalate rarely.} A worked example on three clips from
the matched cell; every element shown is measured, not drawn.
\textbf{(1)}~Video arrives one group-of-pictures (GOP) at a time; generated clips also end early (the length asymmetry analyzed in E5). \textbf{(2)}~The codec has already written a quantized motion
field; a CPU parse reads it for about $10^5$ multiply-accumulate operations (MACs), and its texture separates sparse generated
from rich real motion. \textbf{(3)}~Per-GOP scores $s_t$ (dashed) aggregate into a running maximum $M_t$ (solid), compared once against the end-calibrated threshold $\tau$ (Proposition~\ref{prop:avi}). The generated clip exits at GOP~2; the real clip commits at stream end. A clip finishing in the uncertain band escalates to a GPU stage (about $15\%$ of inputs).}
\label{fig:teaser}
\end{figure}

Text-to-Video systems such as Sora~2, Veo~3, and Kling now produce AI-generated video (AIGV)
so realistic that detection has become a practical necessity. Such detection is
typically benchmarked offline but deployed online; the two settings, however, reward different types of
detectors. An offline detector receives the entire clip at once, decodes it into pixels, and
produces a single score. Recent methods increasingly use vision-language models (VLMs) to reason
over complete clips \cite{vidguardr1,skyra,ivyfake,busterx,videoveritas}. In contrast, deployment
video arrives sequentially, and the detector must make a decision within a latency budget. At
deployment scale, running a multi-billion-parameter model on every clip is computationally
impractical. Recent benchmarks also show  that strong
multimodal models perform near chance on challenging, in-the-wild AIGV
\cite{artifactbench,genvideolens}, making per-clip scoring with such models not only expensive but also, more importantly, unreliable.

Several neighboring lines of work address parts of this problem, but none combines streaming
operation, compressed-domain analysis, and compute-adaptive detection for fully-synthetic AIGV.
Streaming perception
folds latency into accuracy, but for object detection and action recognition, not generated video
\cite{streamyolo,edgeoar}. Real-time deepfake detectors operate online but analyze decoded facial imagery
\cite{activeprobe,realtimedf}, while motion- and frequency-based methods remain offline and
face-specific \cite{opticalflowdf,compdeepfake}, including one that reads H.264 motion-vector (MV)
fields \cite{mvdeepfake}. Detectors for fully-synthetic AIGV decode pixels and
run a heavy per-clip forward pass, forensic and VLM-reasoning alike
\cite{waverep,d3,vidguardr1,nativescale}. The adaptive-computation toolkit for spending compute
per input is mature \cite{violajones,branchynet,earlyexitcascade,cascadedefer}, and concurrent
work brings it to AIGV. CoCoDetect gates a 3D convolutional neural network (CNN) and escalates
uncertain clips to a multimodal large language model (MLLM), but current AIGV systems remain
offline and pixel-domain \cite{cocodetect}.

The compressed bitstream already contains a signal suitable for streaming detection. For each group of pictures (GOP), the codec writes a quantized motion field into the stream.
Extracting and scoring this field requires only lightweight CPU-side processing and no
pixel-domain neural-network forward pass \cite{vidaudit}. In addition, the signal is available at every prefix, allowing the detector to act before the
complete clip has arrived. Pixel and VLM detectors must instead decode frames and run a forward
pass, so their natural unit is a whole clip rather than an arriving chunk, and their resizing and
re-encoding can weaken the high-frequency forensic cues some of them rely on
\cite{nativescale}. We aggregate the per-GOP scores monotonically and apply a
single end-calibrated threshold, yielding false-positive control at the detector's
data-dependent stopping time. Clips that remain near the decision boundary are deferred to a
heavier pixel or VLM detector (Figure~\ref{fig:teaser}). On matched GenVidBench, the codec stage
reaches AUC $0.80$ from the first GOP alone, at five orders of magnitude less compute than a pixel
detector. Deferring $15\%$ of clips raises accuracy from $0.75$ to $0.78$ while using seven times
less compute than pixel-domain scoring on every clip.

We make three contributions. \textbf{Streaming AIGV detection: a problem, a protocol, and
metrics.} We recast detection as streaming perception and define anytime AUC (area under the ROC
curve) over the observed prefix, a latency-budgeted streaming AUC, and recall at a fixed
false-positive rate versus prefix, over public benchmarks with a matched harness
(Sections~\ref{sec:problem} and~\ref{sec:exp}). \textbf{An anytime compressed-domain front-end
with a temporal guarantee.} We aggregate GOP-level codec scores into a running maximum gated by
one threshold. Monotonicity buys the guarantee (Propositions~\ref{prop:anytime}
and~\ref{prop:avi}). A threshold calibrated once, at stream end, controls the false-positive rate
at the gate's own firing time, which the data choose (under the calibration null). Per-prefix
recalibration does not (Sections~\ref{sec:method} and~\ref{sec:theory}). \textbf{Reasoning on
demand, with a priced frontier.} We escalate only the clips the gate calls uncertain, to an
expensive pixel or VLM detector. Expected compute then has a closed form, so a budget maps to a
deferral window and one point on the compute-accuracy frontier. The frontier is monotone only
where the deferral condition holds. Stage~2 must be at least as accurate as stage~1 there
(Proposition~\ref{prop:cascade}, Corollary~\ref{cor:budget}; Section~\ref{sec:exp} measures both
frontiers).

\section{Related Work}
\label{sec:related}

\paragraph{AI-generated video detection} Non-reasoning detectors span frequency and forensic
features \cite{waverep}, representation-geometry cues \cite{restrav}, training-free temporal
statistics \cite{d3}, and spectral-semantic fusion \cite{specsemnet}. Large benchmarks
evaluate them \cite{genvidbench,aigvdbench,demamba}. The most active recent thread couples detection with
explanation through VLMs and reinforcement learning
\cite{vidguardr1,skyra,ivyfake,busterx,videoveritas}. These are accurate and interpretable but
heavy and offline, remaining clip-level even at native resolution \cite{nativescale}. Reliability is the other issue. Off-the-shelf multimodal models are often near-random on hard
or in-the-wild AIGV \cite{genvideolens,artifactbench}. We do not propose
another such detector. We use one as an on-demand escalation stage and ask how rarely it must be
called. Our front-end faces one known threat. A recent audit shows that motion-cue detectors can
exploit dataset motion biases (``generated videos move less'') rather than generation artifacts
\cite{motionbias}. Our stage~1 reads codec motion vectors, so we adopt the concern directly and evaluate under motion-bias control (E7 in Section~\ref{sec:exp}). We also report the identity floor the matched harness gets from scoring real clips against real ones \cite{vidaudit}.

\paragraph{Streaming perception} Streaming perception jointly evaluates
latency and accuracy for online object detection \cite{streamyolo}, and online action
recognition exits early on edge devices \cite{edgeoar}. Streaming VLM inference is
emerging \cite{codecsight}. We adopt this setting for detection. A decision is a function of
the observed prefix and a latency budget, not of the whole clip.

\paragraph{Real-time and prior-cue deepfake detection} The closest detectors come from the
deepfake (face-manipulation) literature. Real-time methods authenticate faces in calls through active
probing \cite{activeprobe} or spatial-frequency artifacts \cite{realtimedf}. These run online but
read faces from decoded pixels. Offline methods use optical flow \cite{opticalflowdf} or
compression-aware frequency cues \cite{compdeepfake}, at the clip level, again for faces. The nearest prior reads the same H.264 motion vectors as our stage~1, confirming that they carry a cheap temporal cue \cite{mvdeepfake}. But it crops faces from decoded RGB, classifies face-swap clips offline, and raises streaming only as a prospect. We differ on every axis at once: fully-synthetic AIGV, the bitstream
as an always-on streaming front end, and an anytime, compute-adaptive decision with guarantees.
Table~\ref{tab:positioning} compares the closest work on four axes: online
operation, pixel inference, adaptive compute, and cost per clip.

\paragraph{Adaptive computation} Spending computation per input is classical, from boosted cascades \cite{violajones} to early-exit networks \cite{branchynet}. It continues in window-based early-exit ensembles that route only near-boundary inputs deeper \cite{earlyexitcascade}, and in confidence-based cascade deferral with accuracy conditions \cite{cascadedefer}. The current test-time-compute literature uses the same idea. An expensive model is invoked only when a
cheap model is uncertain \cite{ttcsurvey,bestroute}. Concurrent work brings this pattern to AIGV. CoCoDetect gates a pixel-domain 3D CNN and routes uncertain clips to an MLLM \cite{cocodetect}. But it is offline and clip-level, decodes every frame, and offers no anytime or budget guarantees. Our cascade applies these established methods. The novelty is the streaming
problem, the bitstream-native anytime stage, and the resulting frontiers and guarantees.

\paragraph{Compressed-domain video and provenance} Beyond detection, codec signals drive
object tracking without RGB decoding \cite{seewithoutdecode} and efficient streaming VLM
inference \cite{codecsight}. Our own VidAudit toolkit uses them as an offline forensic substrate
\cite{vidaudit}. None detect generated video as it streams. Source-side provenance labels \cite{c2pa,synthid,euaiact} are complementary, but require source cooperation. Transcoding can strip or degrade them, so passive detection remains necessary. The codec signal we read is produced by that same re-compression (discussion in \cref{app:related}).

\begin{table}[t]
\centering
\small
\setlength{\tabcolsep}{5pt}
\renewcommand{\arraystretch}{1.20}
\begin{tabularx}{\textwidth}{@{}Ycccc@{}}
\toprule
\headrow
Approach & online & \begin{tabular}[c]{@{}c@{}}no pixel\\inference\end{tabular} &
\begin{tabular}[c]{@{}c@{}}adaptive\\compute\end{tabular} &
\begin{tabular}[c]{@{}c@{}}cost /\\clip\end{tabular} \\
\midrule
VLM-reasoning detector~\cite{vidguardr1,skyra} & \cno & \cno & \cno & \costhi \\
Forensic / spectral~\cite{waverep,d3} & \cno & \cno & \cno & \costmid \\
Offline benchmarks~\cite{genvidbench,aigvdbench} & \cno & \cno & \cno & \textcolor{natgray}{n/a} \\
Real-time deepfake, face~\cite{activeprobe,realtimedf} & \cyes & \cno & \cno & \costmid \\
Flow/MV/freq.\ deepfake~\cite{opticalflowdf,mvdeepfake,compdeepfake} & \cno & \cno & \cno & \costmid \\
Streaming perception~\cite{streamyolo} & \cyes & \cno & partial & \costmid \\
Online action recognition~\cite{edgeoar} & \cyes & \cno & early-exit & \costlo \\
Gated pixel cascade~\cite{cocodetect} & \cno & \cno & defer & \costmid \\
\ourrow
\textbf{This paper} & \cyes & \cyes$^{*}$ & \textbf{defer} & \textcolor{natgreen}{\textbf{low}}$^{+}$ \\
\bottomrule
\end{tabularx}
\caption{Positioning against the surrounding lines of work
(\cyes/\cno${=}$has/lacks; cost \costlo/\costmid/\costhi). ``No pixel inference'': the always-on stage scores the codec motion field, not decoded frames. Obtaining the field is a parse by design. $^{*}$the always-on stage~1. $^{+}$the reference implementation's caveat is under Cost.}
\label{tab:positioning}
\end{table}

\section{Streaming AIGV Detection}
\label{sec:problem}

\paragraph{Stream model} A video arrives as an ordered sequence of chunks $g_1,g_2,\dots,g_N$. At
prefix length $t$ the detector has observed $g_{1:t}$ and must be able to output a decision
$D_t\in\{\text{real},\text{generated},\text{wait}\}$ from it alone. Each chunk carries its
arrival/processing latency, so a decision at prefix $t$ has a well-defined latency $\ell_t$. The chunk
is the arrival and decision unit, a fixed window of $16$ frames by default. Its size is a
design knob we ablate (E7). Under the harness's
canonical H.264 re-encode (closed GOP of $16$; Section~\ref{sec:exp}) a chunk is one encoder GOP, so the two terms are interchangeable below. In the wild a clip's GOP structure can be far coarser, so the detector windows arriving
frames itself (detail in the appendix).

\paragraph{Compressed-domain features} The encoder has already summarized each GOP's motion: a
quantized motion-vector field, written into the bitstream. For each GOP we read that field and
condense it into a fixed feature $\feat_t\in\mathbb{R}^{13}$, four global motion statistics and nine spectral dimensions, following our VidAudit toolkit \cite{vidaudit}. The scorer therefore runs no pixel-domain forward pass (the implementation
caveat is under Cost). A base scorer $f_R$ maps a feature (or the running
aggregate) to a score $s_t\in\mathbb{R}$, higher meaning ``generated.''

Streaming evaluation needs metrics indexed by time. Definition~\ref{def:metrics} makes the prefix and the decision latency explicit.

\begin{definition}[Anytime accuracy and streaming AUC]
\label{def:metrics}
Let $M_t$ be the detector's running score after prefix $t$. The \emph{anytime AUC} at prefix $t$,
$\mathrm{AUC}(t)$, is the ROC-AUC of $M_t$ over the dataset. For a latency budget $B$ (a maximum
admissible prefix or wall-clock), the \emph{latency-budgeted streaming AUC} is
$\sAUC(B)=\mathrm{AUC}(t_B)$ where $t_B$ is the largest prefix reachable within $B$; the full
\emph{anytime curve} is $\{(\ell_t,\mathrm{AUC}(t))\}_t$. We also define recall at a fixed
false-positive rate as a function of prefix, and the decision-latency distribution under a
confidence gate.
\end{definition}

Computed on public benchmarks with a matched harness, these metrics are the artifact we release. Any detector can be plotted on the same axes; an offline one enters as a single point at the full prefix.

\section{Method}
\label{sec:method}

\paragraph{Anytime front-end} The always-on stage maintains a running aggregate of the per-GOP
codec scores. We use a sequential disjunctive aggregate $M_t=\max_{i\le t}s_i$, and a
\emph{single} decision threshold $\tau$ calibrated to the final-prefix score distribution on real clips (the calibration null). A confidence gate raises ``generated'' as soon as
$M_t\ge\tau$. By
Proposition~\ref{prop:avi} this one threshold controls the false-positive rate at every prefix
and at the data-dependent decision time. If the stream ends or the latency budget expires
without crossing $\tau$, we ask how close $M_t$ came to $\tau$. We fix a
\emph{deferral width} $w$. The clip is ``real'' when $M_t<\tau-w$, and uncertain when
$M_t\in[\tau-w,\tau)$, with $t$ the decision point.

\paragraph{Reasoning on demand} A clip whose final aggregate lands in the one-sided band
$W=[\tau-w,\tau)$ is escalated to an expensive stage-2 detector $h$ applied to the observed
prefix. By default $h$ is a lightweight pixel CNN, optionally a small VLM reasoning detector
\cite{ivyfake,vidguardr1,skyra}. The width $w$ is the single knob setting the deferral rate. The final
decision is stage~2's verdict on deferred clips and stage~1's elsewhere. Section~\ref{sec:theory} analyzes this cascade with a generic score $s$ and window $W$. Here $s$ is the aggregate $M$ at the decision point and $W$ is this band. Sweeping $w$ is thus exactly the budget sweep of Corollary~\ref{cor:budget}. The full procedure is Algorithm~\ref{alg:stream}.

\begin{algorithm}[t]
\caption{Streaming, compute-adaptive AIGV detection (one clip; $\tau$ and $w$ are set once,
offline).}
\label{alg:stream}
\begin{algorithmic}[1]
\Require GOP stream $g_1,\dots,g_N$; base scorer $f_R$; decision threshold $\tau$ (calibrated per
Proposition~\ref{prop:avi}); deferral width $w$ (band $W=[\tau-w,\tau)$); stage-2 detector $h$;
latency budget $B$
\Ensure decision and decision latency
\State $M_0 \gets -\infty$
\For{$t=1,2,\dots,N$ \textbf{while} latency $\ell_t\le B$}
  \State $\feat_t \gets \mathrm{ParseMV}(g_t)$ \Comment{no pixel-domain forward pass}
  \State $s_t \gets f_R(\feat_t)$; \quad $M_t \gets \max(M_{t-1},s_t)$
  \If{$M_t \ge \tau$}
    \State \Return (\textbf{generated}, $\ell_t$) \Comment{confident early exit}
  \EndIf
\EndFor
\State \textit{// decision point reached without crossing $\tau$}
\If{$M_t < \tau - w$}
  \State \Return (\textbf{real}, $\ell_t$) \Comment{confident}
\Else
  \State $y \gets h(\mathrm{Decode}(g_{1:t}))$ \Comment{uncertain ($M_t\in W$): stage 2 only here}
  \State \Return ($y$, $\ell_t$)
\EndIf
\end{algorithmic}
\end{algorithm}

\paragraph{Cost} Stage-1 scoring costs about $10^{5}$ MACs per GOP on CPU, five
orders of magnitude below a pixel-CNN forward pass, with no GPU. The only GPU cost is stage~2, on
the deferred fraction. One caveat concerns wall-clock time. Our reference implementation reads the field through the decoder's side-data export, and so pays a frame decode. A purpose-built parser avoids that decode, and a platform already performs the decode anyway. Either way, the compute the codec stage adds is the $10^{5}$-MAC parse, the figure we
report (full accounting in the appendix).

\section{Analysis}
\label{sec:theory}

We fix a perturbation-free streaming setting and analyze the two design choices. The
aggregator governs when the detector may speak, and the deferral rule governs what speaking
costs. Proof ideas are given here, and the two longer proofs are in the appendix. The
first result prices waiting.

\begin{proposition}[Anytime monotonicity and its false-positive cost]
\label{prop:anytime}
Under the disjunctive aggregate $M_t=\max_{i\le t}s_i$ with a \emph{fixed} threshold $\tau$, for
every clip the events $\{M_t\ge\tau\}$ are nested (non-decreasing) in $t$. Consequently the
population true-positive rate $\mathrm{TPR}_\tau(t)=\Pr_{\mathrm{gen}}[M_t\ge\tau]$ and
false-positive rate $\mathrm{FPR}_\tau(t)=\Pr_{\mathrm{real}}[M_t\ge\tau]$ are both non-decreasing
in $t$, where for a clip of length $N<t$ we set $M_t:=M_N$ (the maximum freezes at stream end), so
the rates are over the full population at every $t$.
\end{proposition}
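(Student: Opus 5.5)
The argument is pathwise first and distributional second. Fix a single clip with realized per-GOP scores $s_1,\dots,s_N$, and extend the running aggregate past the stream end by the freezing convention $M_t:=M_N$ for $t>N$.

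The first step is to show that $t\mapsto M_t$ is non-decreasing on all of $t\ge 1$.
\begin{itemize}
\item For $t<N$ we have $M_{t+1}=\max(M_t,s_{t+1})\ge M_t$, since a maximum over a superset of indices is at least the maximum over the subset.
\item For $t\ge N$ we have $M_{t+1}=M_t=M_N$ by the convention.
\end{itemize}
So the extended sequence is monotone for every clip, with no assumption on the $s_i$ beyond being real numbers (or $-\infty$ before the first GOP).

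Nestedness then follows. Since $\tau$ is fixed and does not depend on $t$, if $M_t\ge\tau$ then $M_{t+1}\ge M_t\ge\tau$. Hence $\{M_t\ge\tau\}\subseteq\{M_{t+1}\ge\tau\}$ as events on the clip's sample path, which is the first claim.

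For the population rates, I will view a clip as a random element drawn from the generated (respectively real) population. The clip length $N$ is allowed to be random and to differ across clips. By the freezing convention, $M_t$ is a well-defined random variable for every $t$ on the whole population. Measurability is routine, since $M_t$ is a finite maximum of measurable scores on each event $\{N=n\}$. Monotonicity of probability measures applied to the nested events then gives
\[
\Pr_{\mathrm{gen}}[M_t\ge\tau]\le\Pr_{\mathrm{gen}}[M_{t+1}\ge\tau]
\quad\text{and}\quad
\Pr_{\mathrm{real}}[M_t\ge\tau]\le\Pr_{\mathrm{real}}[M_{t+1}\ge\tau].
\]
That is, both $\mathrm{TPR}_\tau(t)$ and $\mathrm{FPR}_\tau(t)$ are non-decreasing in $t$.

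There is no real obstacle here. The only point needing care is the variable-length population. Without freezing, the set of clips counted at prefix $t$ would shrink as $t$ grows, and the rates could fall simply because clips drop out. The convention $M_t:=M_N$ is exactly what keeps the denominator fixed and makes the set inclusion hold clip by clip, so I will state it explicitly before invoking monotonicity of measure.

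Finally, I will remark on why this "prices waiting." The same nesting shows the false-positive cost of waiting is nonnegative, so a fixed $\tau$ can only accumulate false alarms over time. This motivates calibrating $\tau$ at the final prefix, which is the setup of Proposition~\ref{prop:avi}.
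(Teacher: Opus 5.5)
Your proof is correct and follows the paper's own argument: $M_{t+1}=\max(M_t,s_{t+1})\ge M_t$ pathwise gives the nested events $\{M_t\ge\tau\}\subseteq\{M_{t+1}\ge\tau\}$, and monotonicity of probability under the generated and real distributions then gives both rate monotonicities. You also check explicitly that the freezing convention $M_t:=M_N$ keeps the sequence monotone for $t\ge N$, a step the paper leaves implicit, and you handle it correctly.
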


\begin{proof}
$M_t=\max(M_{t-1},s_t)\ge M_{t-1}$, so $M_t$ is non-decreasing in $t$ pointwise; hence
$\{M_{t-1}\ge\tau\}\subseteq\{M_t\ge\tau\}$ as events. Taking probabilities under the generated
and real distributions gives the two monotonicities.
\end{proof}

Proposition~\ref{prop:anytime} makes the latency-accuracy trade-off precise. Observing more
can only raise recall and the false-positive rate, because a disjunctive rule
accumulates chances to fire. Waiting can cost false alarms.

The false-positive side needs more care. A streaming detector emits its verdict at a
\emph{data-dependent stopping time}, the first prefix at which the gate is confident. A
fixed-prefix guarantee need not survive that stopping rule. The sequential-analysis literature calls this the optional-stopping problem and answers it with
anytime-valid inference \cite{wald1945,savi,evalues,inducedseqtest}: error control at every
prefix and under any stopping rule. For our monotone aggregate, one end-calibrated threshold
supplies that control at a target false-positive level $\alpha$. Per-prefix recalibration removes it.

\begin{proposition}[Anytime-valid false-positive control]
\label{prop:avi}
Let $P_0$ be the real (null) distribution, let $M_t=\max_{i\le t}s_i$, and let the detector raise
``generated'' at the alarm time $\sigma=\inf\{t\le N: M_t\ge\tau\}$ (and ``real'' if no such $t$).
\begin{enumerate}\itemsep1pt
\item[(i)] \textbf{A single end-calibrated threshold is anytime-valid.} Calibrate the threshold
$\tau$ so that the null tail at the final prefix is controlled: $P_0(M_N\ge\tau)\le\alpha$. Such a
$\tau$ exists: the upper $(1-\alpha)$-quantile of $M_N$ when $M_N$ has no atom there (then with
equality), and in general any $\tau$ above it. Then the false-positive rate at the data-dependent alarm time equals
the final-prefix rate and is controlled, $P_0(\sigma\le N)=P_0(M_N\ge\tau)\le\alpha$, and
simultaneously $P_0(M_t\ge\tau)\le\alpha$ for every prefix $t\le N$.
\item[(ii)] \textbf{Per-prefix recalibration is not.} Choosing $\tau_t$ so that
$P_0(M_t\ge\tau_t)=\alpha$ at each prefix (an atomless null, for concreteness) controls the
per-prefix rate but not the alarm-time rate: $P_0(\exists\,t\le N: M_t\ge\tau_t)\ge\alpha$, with
strict inequality iff an early crossing that the final test misses has positive probability,
$P_0(\exists\,t<N:M_t\ge\tau_t,\,M_N<\tau_N)>0$: the multiple-looks inflation.
\end{enumerate}
\end{proposition}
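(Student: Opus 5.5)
The whole argument rests on the pathwise monotonicity already established in Proposition~\ref{prop:anytime}. On each clip, $M_t=\max(M_{t-1},s_t)\ge M_{t-1}$, and after stream end $M_t$ is frozen at $M_N$. I first rewrite the alarm event as an event about the final prefix alone. Fix a clip of length $N$. If $\sigma\le N$, then some $t\le N$ has $M_t\ge\tau$, and monotonicity gives $M_N\ge M_t\ge\tau$. Conversely, $M_N\ge\tau$ makes $t=N$ a witness, so $\sigma\le N$. Hence, pathwise,
\[
\{\sigma\le N\}=\{\exists\,t\le N: M_t\ge\tau\}=\{M_N\ge\tau\}.
\]
Taking $P_0$-probabilities gives $P_0(\sigma\le N)=P_0(M_N\ge\tau)\le\alpha$ under the calibration. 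The per-prefix bound follows from the inclusion $\{M_t\ge\tau\}\subseteq\{M_N\ge\tau\}$, which is exactly the nesting in Proposition~\ref{prop:anytime}.

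For existence of $\tau$, let $F$ be the null CDF of $M_N$. Take $q=\inf\{x: F(x)\ge 1-\alpha\}$. Then any $\tau>q$ gives $P_0(M_N\ge\tau)\le 1-F(q)\le\alpha$, using right-continuity of $F$. If $F$ is continuous at $q$, then $P_0(M_N\ge q)=1-F(q^-)=1-F(q)=\alpha$, so equality holds at $\tau=q$. One point needs care: if $N$ is random, the events and $M_N$ are defined per clip with the frozen maximum, so the identity above holds clip by clip and survives mixing over clip lengths.

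For (ii), the key step is the decomposition
\[
\{\exists\,t\le N: M_t\ge\tau_t\}=\{M_N\ge\tau_N\}\,\cup\,\{\exists\,t<N: M_t\ge\tau_t,\ M_N<\tau_N\}.
\]
The union is disjoint. The first term has probability $\alpha$ by the per-prefix calibration at $t=N$. Therefore
\[
P_0(\exists\,t\le N: M_t\ge\tau_t)=\alpha+P_0(\exists\,t<N: M_t\ge\tau_t,\ M_N<\tau_N)\ge\alpha,
\]
with strict inequality exactly when the second term is positive. This is the stated iff.

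I expect the main obstacle to be stating (ii) cleanly when $N$ varies across clips. In that case "the final test" $\tau_N$ must be read per clip, with the thresholds indexed by prefix under the frozen-max convention. I would handle this by conditioning on $N=n$ and treating the freeze convention uniformly. I would also remark why monotonicity does not rescue (ii). Since $M_t$ increases, its null quantiles satisfy $\tau_t\le\tau_N$. An early crossing of a lower threshold $\tau_t$ therefore need not imply $M_N\ge\tau_N$, so the collapse of the union onto the final event, which drove (i), fails here.
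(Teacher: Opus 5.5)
Your proposal is correct and follows the paper's proof. Part (i) is the same pathwise collapse $\{\exists\,t\le N: M_t\ge\tau\}=\{M_N\ge\tau\}$ from the monotonicity of Proposition~\ref{prop:anytime}, and part (ii) rests on the same observation that the union contains the final crossing and exceeds $\alpha$ only through early crossings the final test misses. Two small differences work in your favor: your disjoint decomposition gives that excess exactly as $P_0(\exists\,t<N: M_t\ge\tau_t,\,M_N<\tau_N)$, so the iff needs neither the paper's lemma ordering $\tau_t\le\tau_N$ nor its lower-bound argument, and your quantile argument proves the existence of $\tau$, which the paper only asserts.
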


\emph{Proof idea.} Monotonicity collapses the union over prefixes into a single event,
$\{\exists\,u\le t: M_u\ge\tau\}=\{M_t\ge\tau\}$. The alarm therefore fires at all iff it fires at the end. Part (i) is then the calibration of $\tau$ read back. For (ii), the per-prefix thresholds $\tau_t$ are non-decreasing (a quantile of the stochastically increasing $M_t$). So $M$ can cross an early boundary yet finish below the higher final one. That is exactly the mass the union carries beyond $\alpha$.
\Cref{app:proofs} proves both parts, the second via a lemma on the monotonicity of $M_t$.

\smallskip
\noindent Proposition~\ref{prop:avi} thus guarantees level $\alpha$ at the detector's actual
operating point, the stopping time, not merely at a fixed prefix. With $\tau$
so calibrated, Definition~\ref{def:metrics}'s anytime curve traces recall against latency at a
controlled false-alarm level.

The deferral rule determines what escalation costs and when it is worth
paying.

\begin{proposition}[Cascade compute and accuracy]
\label{prop:cascade}
Let stage~1 produce a score $s$ with per-GOP compute $C_1$, and let stage~2 have compute $C_2$ and
be invoked exactly when $s\in W$, both at constant unit costs; the cascade decides by stage~2 on
$\{s\in W\}$ and by stage~1's deterministic rule elsewhere. Write $\mathrm{err}_i^{A}$ for stage~$i$'s \emph{error mass} on a region
$A$: the probability of being wrong \emph{and} in $A$. Let $\bar t$ be the expected number of GOPs
processed, and $\mathrm{err}_1$ stage~1's error if it decided everywhere. The expected per-clip
compute and the cascade error are then
\[
\begin{aligned}
\mathbb{E}[C]&=\bar{t}\,C_1+\Pr(s\in W)\,C_2,\\[2pt]
\mathrm{err}_{\mathrm{casc}}&=\mathrm{err}_1-\bigl(\mathrm{err}_1^{W}-\mathrm{err}_2^{W}\bigr).
\end{aligned}
\]
The bracketed term is the \emph{deferral gain} $\Delta(W):=\mathrm{err}_1^{W}-\mathrm{err}_2^{W}$,
so the cascade differs from stage~1 by exactly that gain. Hence $\mathrm{err}_{\mathrm{casc}}\le\mathrm{err}_1$
iff $\mathrm{err}_2^{W}\le\mathrm{err}_1^{W}$, the confidence-deferral condition, and strictly when
stage~2 is strictly better on the deferred set.
\end{proposition}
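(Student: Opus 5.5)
The plan is to treat the two displayed identities separately, since both are bookkeeping consequences of how the cascade is defined.

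\textbf{Compute.} Write the realized per-clip compute as
\[
C = T\,C_1 + \mathbf{1}\{s\in W\}\,C_2 ,
\]
where $T$ is the (random) number of GOPs stage~1 actually parses. This holds because stage~1 is always on and charges $C_1$ per processed GOP. Stage~2 is charged once, exactly on the deferral event. Taking expectations and using linearity gives $\mathbb{E}[C]=\bar t\,C_1+\Pr(s\in W)\,C_2$, with $\bar t=\mathbb{E}[T]$. No independence between $T$ and the deferral event is needed, which is the point to flag. The unit costs are constants, so they pull out of the expectation directly.

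\textbf{Error.} Let $E_1$ be the event that stage~1's deterministic rule is wrong, and $E_2$ the event that stage~2 is wrong. The cascade is wrong on the disjoint union of $E_2\cap\{s\in W\}$ and $E_1\cap\{s\notin W\}$. Hence
\[
\mathrm{err}_{\mathrm{casc}}=\mathrm{err}_2^{W}+\mathrm{err}_1^{W^c}.
\]
Because $\{s\in W\}$ and $\{s\notin W\}$ partition the space, $\mathrm{err}_1=\mathrm{err}_1^{W}+\mathrm{err}_1^{W^c}$. Substituting $\mathrm{err}_1^{W^c}=\mathrm{err}_1-\mathrm{err}_1^{W}$ yields
\[
\mathrm{err}_{\mathrm{casc}}=\mathrm{err}_1-\Delta(W).
\]

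\textbf{Deferral condition.} The equivalence $\mathrm{err}_{\mathrm{casc}}\le\mathrm{err}_1 \iff \mathrm{err}_2^{W}\le\mathrm{err}_1^{W}$ is now just $\Delta(W)\ge 0$. The strict version is $\Delta(W)>0$, which is what "stage~2 strictly better on the deferred set" means when phrased in error mass.

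\textbf{Main obstacle.} The only delicate step is making "stage~1's error if it decided everywhere" well defined on $W$. Inside the band, Algorithm~\ref{alg:stream} never emits a stage-1 verdict. I would fix the convention that stage~1's deterministic rule also assigns a label on $W$, for instance "real", since $M<\tau$ there. This makes $\mathrm{err}_1^{W}$ a counterfactual error mass. With that convention the partition argument above goes through verbatim.
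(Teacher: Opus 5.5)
Your proposal is correct and matches the paper's proof essentially line for line. Compute follows from linearity of expectation over the processed-GOP count and the deferral indicator, and the error identity follows from the partition $\mathrm{err}_{\mathrm{casc}}=\mathrm{err}_1^{\bar W}+\mathrm{err}_2^{W}$ combined with $\mathrm{err}_1=\mathrm{err}_1^{\bar W}+\mathrm{err}_1^{W}$; your explicit counterfactual labeling convention on $W$ is a useful clarification of what the paper leaves implicit in ``stage~1's error if it decided everywhere.''
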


\emph{Proof idea.} Compute is linear over the GOPs processed and the deferral event. The
cascade is stage~1 off $W$ and stage~2 on $W$, so subtracting the identity
$\mathrm{err}_1=\mathrm{err}_1^{\bar W}+\mathrm{err}_1^{W}$, with $\bar W$ the complement of $W$, leaves exactly the deferral gain. The
appendix carries this out.

\begin{corollary}[Budgeted compute-accuracy frontier]
\label{cor:budget}
For an expected-compute budget $\mathbb{E}[C]=b$ with
$\bar t\,C_1\le b\le \bar t\,C_1+\Pr(s<\tau)\,C_2$ (gate-fired clips are never deferred), a width
$w$ with deferral probability $\Pr(s\in W)=(b-\bar t\,C_1)/C_2$ meets the budget exactly when the
score law is atomless below $\tau$; the sweep leaves $\bar t$ and $\mathrm{err}_1$ unchanged, since
deferral is decided only at the decision point. The achieved operating point is
\[
\bigl(\mathbb{E}[C],\,\mathrm{acc}\bigr)=\bigl(\bar t\,C_1+\Pr(s{\in}W)\,C_2,\;
1-\mathrm{err}_1+\Delta(W)\bigr),
\]
with $\Delta(W)=\mathrm{err}_1^{W}-\mathrm{err}_2^{W}$ the deferral gain of
Proposition~\ref{prop:cascade}. Sweeping the deferral width $w$ (hence $W=[\tau-w,\tau)$) traces
this frontier. It is monotone, accuracy non-decreasing in the budget, iff every marginal region
added as $W$ widens satisfies the deferral condition ($\mathrm{err}_2\le\mathrm{err}_1$ there).
Where the condition fails, the frontier bends back, because escalation is being spent where stage~1 was
already correct.
\end{corollary}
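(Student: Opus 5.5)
The plan is to treat the corollary as Proposition~\ref{prop:cascade} read along the one-parameter family of windows $W_w=[\tau-w,\tau)$, plus an intermediate-value argument for hitting the budget.

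\textbf{Budget attainment.} First, observe that the gate decision and the stopping time depend only on whether $M_t\ge\tau$. Deferral is decided afterwards, at the decision point, on clips with $s<\tau$. So $\bar t$ and stage~1's verdict off $W$, hence $\mathrm{err}_1$, do not depend on $w$. Next, define $p(w):=\Pr(s\in[\tau-w,\tau))$ for $w\in[0,\infty]$. This function is non-decreasing, and right-continuous as a difference of distribution functions. Under the assumption that the score law is atomless below $\tau$, $p$ is continuous, with $p(0)=0$ and $p(\infty)=\Pr(s<\tau)$. By the intermediate value theorem, every target $(b-\bar t C_1)/C_2$ in $[0,\Pr(s<\tau)]$ is attained by some $w$. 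This target range is exactly the stated budget interval. Substituting into the compute formula of Proposition~\ref{prop:cascade} gives $\mathbb{E}[C]=b$.

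\textbf{Operating point.} The accuracy coordinate is $1-\mathrm{err}_{\mathrm{casc}}=1-\mathrm{err}_1+\Delta(W)$, read directly off Proposition~\ref{prop:cascade}. This is the part of the statement that is purely a substitution.

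\textbf{Monotonicity of the frontier.} For $w<w'$, the windows are nested, $W_w\subseteq W_{w'}$, and the marginal region is $R=[\tau-w',\tau-w)$. Error masses are additive over disjoint regions, so
\[
\Delta(W_{w'})-\Delta(W_w)=\mathrm{err}_1^{R}-\mathrm{err}_2^{R}.
\]
Since $p$ is non-decreasing, budget is non-decreasing in $w$. Accuracy is therefore non-decreasing in budget iff this increment is $\ge0$ for every such pair, which is the deferral condition on each marginal region. Conversely, if some marginal region has $\mathrm{err}_2^R>\mathrm{err}_1^R$, then accuracy strictly drops while compute weakly rises; this is the ``bend back''.

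The main obstacle is the step from widths to budgets. Where $p$ is flat, several widths give the same budget, so the frontier is a correspondence rather than a function. I would handle this in one of two ways. One is to parametrize by $w$ and state monotonicity in $w$, noting that flat regions add zero probability mass and therefore zero error increment. The other is to pick, say, the minimal width for each budget. The ``iff'' should be interpreted modulo null regions, since regions of probability zero contribute nothing to either coordinate.
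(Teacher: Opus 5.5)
Your proof is correct and follows the paper's argument step for step. Feasibility comes from the monotonicity and, under the atomless assumption, continuity of $\Pr(s\in W(w))$; the operating point is read off Proposition~\ref{prop:cascade}; and the frontier's shape follows from additivity of error mass over the marginal region $A=[\tau-w',\tau-w)$. Your flat-region caveat is more careful than the paper, which asserts the budget strictly increases as $W$ widens, but it needs no special handling: a null marginal region has $\mathrm{err}_1^A=\mathrm{err}_2^A=0$, so equal budgets force equal accuracy, the frontier is a function of the budget, and the ``iff'' holds as stated.
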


The condition in Proposition~\ref{prop:cascade} is the standard confidence-deferral condition \cite{cascadedefer}. Corollary~\ref{cor:budget}'s monotonicity is its marginal form, since each widening adds exactly the deferral gain on the newly deferred region (proof in \cref{app:proofs}). Nothing forces stage~2 to be better on the deferred set, so we
test the condition directly (E3, E4). Together the two guarantees turn ``how early can we decide'' into a calibrated anytime curve. They turn ``how rarely must we call the expensive detector'' into a budgeted compute-accuracy frontier.

\section{Experiments}
\label{sec:exp}

This section specifies the protocol and reports the runs. We run the cheap sanity control first
(E1), then the headline latency-accuracy frontier (E2), the compute-accuracy frontier and deferral
guarantee (E3--E4), earliness and stopping-time control (E5), a cross-dataset replication (E6), and
ablations (E7).

\paragraph{Data, harness, and statistics} We use the matched GenVidBench subset as the
comparison set, a ``cell'' of ${\sim}27$k clips matched between real and
generated sources. AIGVDBench serves for cross-dataset replication
\cite{genvidbench,aigvdbench}, with the harness, splits, and seeds of our VidAudit toolkit \cite{vidaudit}. Clips enter through that harness's
canonical H.264 re-encode (closed GOP of $16$), which normalizes encoder fingerprints. E6
deliberately departs from it. We simulate streaming by presenting each clip GOP-by-GOP in decode
order. The two stages are those of Section~\ref{sec:method}. Stage~1 is the CPU-only codec-MV detector. Stage~2 (default) is a lightweight pixel model (CLIP ViT-B/32) \cite{clip}. A small VLM (Ivy-xDetector, Qwen2.5-VL-3B \cite{ivyfake}) is the escalation upgrade (E4). A clip whose chunks are all low-motion receives a floor score and abstains. It is not dropped, so the evaluated population stays the audited cell. The workload fits one $12$\,GB consumer GPU (hardware: \cref{app:compute}).
Principal numbers are leave-one-generator-out means over the
seven held-out generators (seed~42), with percentile-bootstrap 95\% confidence intervals (CIs).
Decision-accuracy points carry clip-level bootstrap CIs ($10{,}000$ resamples within folds).
Every deferral gain carries a paired test (paired bootstrap and exact McNemar). Multi-seed
hardening is left to future work. Every number except the VLM and ReStraV MAC estimates comes from the
released code path. That path also checks Propositions~\ref{prop:anytime}--\ref{prop:cascade}
and Corollary~\ref{cor:budget} by assertion on synthetic data. \Cref{app:pergen} tabulates
per-generator spread.

\paragraph{Baselines} We compare four systems: (a)~the codec stage-1 alone (anytime); (b)~a pixel
CNN scored on every processed chunk; (c)~a VLM detector scored once per clip, the high-cost
reasoning baseline and an upper bound on cost; and (d)~clip-level offline versions of (a)--(c) for
reference. Each is reported on the metrics of Definition~\ref{def:metrics}, plus expected compute (MACs
and ms) and the deferral rate. ReStraV \cite{restrav}, re-trained under the identical
protocol, anchors the accuracy-forward end (Table~\ref{tab:results}; Figure~\ref{fig:pareto}). Real-time face-deepfake detectors \cite{activeprobe,realtimedf} are complementary, not comparison
baselines (appendix).
A second current external anchor makes the same point. VideoVeritas (Qwen3-VL-8B, zero-shot, run
under the audited protocol) scores the balanced $2{,}127$-clip subsample (E4) at balanced accuracy
$0.82$ $[0.80,0.83]$ and AUC $0.83$ $[0.82,0.85]$, at ${\sim}10^{13}$ MACs per clip. Its
per-generator spread is bimodal. It is near ceiling on cogvideo, pika and vc2 ($0.92$--$0.95$) but
below chance on svd, musev and ms ($0.15$--$0.45$), where it confidently calls generated clips
real. Heavy reasoning detectors are therefore accurate on average and unreliable exactly where
detection is hardest, which is the regime the cascade rations them for.

\begin{figure}[t]
\centering
\kilgraphics[0.72\textwidth]{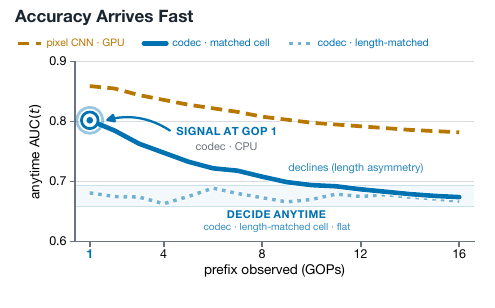}
\caption{\textbf{The latency frontier} (matched 27k cell, leave-one-generator-out; E2, E5). The pixel CNN (orange) leads throughout, but the codec stage (blue) is informative from the first GOP ($\sAUC(1)=0.80$, CPU-only). Its decline reflects the cell's length asymmetry. On a length-matched cell the curve is flat (dotted; band), so the detector can decide at any prefix.}
\label{fig:anytime}
\end{figure}

\begin{figure}[t]
\centering
\kilgraphics[0.72\textwidth]{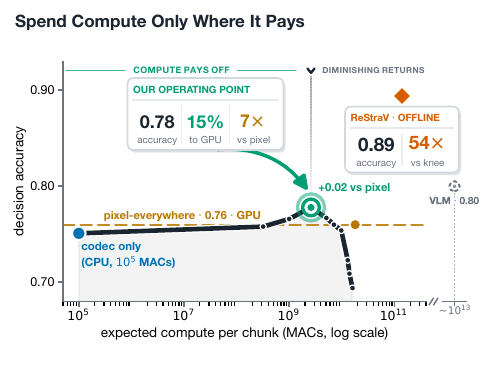}
\caption{\textbf{The compute frontier} (same cell; E3). The cascade frontier peaks at the knee ($0.78$ at $15\%$ deferral). That is above pixel-everywhere $0.76$, at $7\times$ less compute. The frontier then bends back. ReStraV (offline) reaches $0.89$ at ${\sim}54\times$ the knee's per-chunk compute. The
VLM sits off-scale ($\sim\!10^{13}$~MACs per clip; $0.80$ on the subsample).}
\label{fig:pareto}
\end{figure}

\begin{table}[t]
\centering
\small
\setlength{\tabcolsep}{5pt}
\renewcommand{\arraystretch}{1.20}
\begin{tabularx}{\textwidth}{@{}Ycccccc@{}}
\toprule
\headrow System & population & AUC@$N$ & $\sAUC$($1$) & decision acc. & GPU share & compute (MACs) \\
\midrule
Codec stage-1 (anytime) & 27k cell & $0.64$ & $0.80$ & $0.75\;[0.73,0.77]$ & $0\%$ & $10^{5}$ \\
Pixel CNN / chunk & 27k cell & $0.76$ & $0.86$ & $0.76\;[0.75,0.77]$ & $100\%$ & $1.8{\times}10^{10}$ \\
ReStraV (offline, full clip) & 27k cell & $0.95$ & n/a & $0.89\;[0.89,0.90]$ & $100\%$ & ${\sim}1.4{\times}10^{11}$ \\
\ourrow
\textbf{Ours (defer to pixel)} & 27k cell & n/a & n/a & $\mathbf{0.78}^{\dagger}\;[0.76,0.79]$ & $\mathbf{15\%}$ & $2.6{\times}10^{9}$ \\
\midrule
Codec stage-1 & subsample & $0.72$ & $0.88$ & $0.63^{\ddagger}\;[0.60,0.66]$ & $0\%$ & $10^{5}$ \\
VLM / clip (cost upper bnd) & subsample & $0.89$ & n/a & $0.80^{\ddagger}\;[0.78,0.82]$ & $100\%$ & ${\sim}10^{13}$ \\
VideoVeritas (zero-shot) & subsample & $0.83$ & n/a & $0.82^{\ddagger}\;[0.80,0.83]$ & $100\%$ & ${\sim}10^{13}$ \\
\ourrow
\textbf{Ours (defer to VLM)} & subsample & n/a & n/a & $\mathbf{0.72}^{\dagger\ddagger}\;[0.68,0.75]$ & $\mathbf{24\%}$ & $2.4{\times}10^{12}$ \\
\bottomrule
\end{tabularx}
\caption{Measured operating points (E2--E4) by population: the matched 27k cell (upper) and the
balanced $2{,}127$-clip subsample where the VLM was affordable (lower). Blocks are not comparable
(see text). AUC@$N$: running-max AUC at each clip's own full length. Figure~\ref{fig:anytime} stops at $16$ GOPs. Its endpoints $0.674$/$0.782$ (codec/pixel) sit consistently above these. Decision acc.: accuracy of the emitted decision. Brackets: clip-level 95\% bootstrap CIs. GPU share: fraction of clips reaching a
GPU stage (cascade rows: the deferral rate). The pixel cascade is a point of Figure~\ref{fig:pareto}.
Compute: expected MACs per scored chunk (per clip for the VLM, ReStraV, and defer-to-VLM rows; wall-clock in
the appendix). n/a: a cascade emits a
decision, not a score, and the VLM produces one score per clip. $^{\dagger}$accuracy at the
$\alpha{=}0.05$ gate. $^{\ddagger}$balanced accuracy; the VLM is zero-shot (AUC $[0.87,0.90]$).
VLM/ReStraV MACs are order-of-magnitude estimates, the only numbers not from the released code
path.}
\label{tab:results}
\end{table}

\noindent\textbf{E1 (positive control).} The harness must reproduce a known answer: the stage-1
detector's previously reported offline leave-one-generator-out AUC of $0.832$ \cite{vidaudit}. It does, exactly, on the released
feature table. Re-extracting every clip's features with our own pipeline gives $0.830$, and the
real-vs-real identity floor sits at $0.63$, where $0.5$ would be ideal. That floor frames E2's full-length $0.64$. Early prefixes clear it ($\sAUC(1)=0.80$). Streaming results are thus not a harness artifact.

\noindent\textbf{E2 (latency-accuracy frontier).} The pixel CNN is more accurate. Its AUC@$N$
is $0.76$ against the codec stage's $0.64$, and leads at every prefix
(Figure~\ref{fig:anytime}, Table~\ref{tab:results}). The codec stage answers on the other two axes. It is already informative after the first GOP ($\sAUC(1)$ $0.80$ vs $0.86$).
It costs five orders of magnitude less compute ($10^{5}$ vs $1.8{\times}10^{10}$ MACs per
chunk) and does not need any GPU. Its added streaming cost is the $\approx\!33$\,ms CPU parse-and-score. This
is the cheap, always-on front end.

\noindent\textbf{E3 (compute-accuracy frontier + deferral guarantee).} Sweeping the deferral width $w$
traces the measured frontier of Figure~\ref{fig:pareto}. Escalating $15\%$ of clips to the pixel stage lifts decision accuracy from $0.75$ (codec alone) to $0.78$ at $2.6{\times}10^{9}$ expected MACs. That is $7\times$ below scoring every chunk with the pixel CNN. Under clip-level pairing the gain is
$+0.027$ ($[+0.016,+0.038]$; McNemar $p=4{\times}10^{-7}$, $60$ vs $130$ flips). The compute
accounting is exact. The measured MACs match Proposition~\ref{prop:cascade}'s closed form to floating
point.
The frontier itself is monotone only up to the knee and then bends back to an accuracy of $0.69$
at $89\%$ deferral. As the window widens, the cascade escalates clips where stage~1 was already correct,
and the deferral condition fails in more than half of them. That is the conditional monotonicity of
Corollary~\ref{cor:budget}, observed with its negative regime. The knee is where we operate
(the $15\%$ row of Table~\ref{tab:results}).

\noindent\textbf{E4 (reasoning on demand).} We compare the two escalation choices, a pixel CNN and a
small VLM reasoning detector (Ivy-xDetector, Qwen2.5-VL-3B, run zero-shot), on a balanced $2{,}127$-clip
subsample. Both arms see the identical deferred set at a given budget, isolating the stage-2
choice. As a standalone, the VLM is the most accurate detector on this subsample. Its AUC is $0.89$ $[0.87,0.90]$ against the codec stage's $0.72$ on the same clips. It costs ${\sim}3{,}800\times$ the knee's expected compute (${\sim}10^{13}$ MACs, $16$\,s per clip). That cost is what the cascade rations. Where the choice matters is the deferred set. On the near-boundary clips the codec stage is at chance ($0.50$ balanced accuracy, by construction). The VLM classifies them far better than the pixel CNN, $0.80$ against $0.61$. The advantage holds at every budget measured. Escalating $24\%$ of clips reaches $0.72$ balanced accuracy with the VLM against $0.65$ with the pixel CNN ($+0.06$ $[+0.04,+0.09]$, McNemar $p=1{\times}10^{-11}$). At $55\%$, escalation reaches $0.79$ against $0.68$ ($+0.11$ $[+0.07,+0.15]$, $p=2{\times}10^{-19}$).

The caveat is the deferral budget. The codec stage is weak at the strict $\alpha{=}0.05$ gate on this subsample ($0.63$ balanced accuracy). The cascade must therefore escalate a quarter of clips to gain $+0.09$ over stage~1. It nears the VLM's standalone $0.80$ only at $55\%$ deferral ($0.79$). There it buys compute savings, not accuracy, over running the VLM everywhere.
Reasoning on demand is thus worth its cost in the regime the frontier makes explicit, not
at the ``few percent'' a stronger stage~1 would allow.

\noindent\textbf{E5 (how early can we decide).} The guarantees hold where they should. The
end-calibrated threshold keeps the stopping-time false-positive rate at $0.039$ for target
$\alpha{=}0.05$, and Proposition~\ref{prop:anytime}'s monotonicity holds for every clip. However, in the matched cell the anytime curve declines with prefix (Figure~\ref{fig:anytime}). The cause is the cell, not the signal. Generated clips are shorter, so their maxima freeze early. The real clips' maxima keep accruing false-alarm opportunities (Proposition~\ref{prop:anytime} under unequal lengths). On a long-form cell whose real and generated clips are matched in length, the curve is flat ($0.68$ across all prefixes). The stopping-time FPR stays controlled at $0.041$. There the first GOP is
as informative as the whole clip, so waiting gains nothing.

\noindent\textbf{E6 (cross-dataset).} We repeat the protocol on AIGVDBench (seven generators
plus a real source, $3{,}159$ clips). Leave-one-generator-out AUC is $0.62$ (bootstrap $[0.58,0.65]$), within the spread of GenVidBench's $0.64$. The anytime curve keeps the same early-informative, prefix-declining shape. One difference strengthens the result: motion vectors here come from each clip's source
codec, not our re-encode, so the signal survives a change of encoder as well. The gate does not transfer unchanged. Its
stopping-time FPR rises to $0.068$ under this shift (E7). The trend replicates, though one cross-dataset cell is limited evidence of external
validity.

\noindent\textbf{E7 (ablations).} Five ablations follow (tables in the appendix).

\emph{The gate} behaves as Proposition~\ref{prop:avi} predicts where its calibration null matches
the test null (stopping-time FPR $0.039$--$0.041$ at $\alpha{=}0.05$). It drifts above target when that null shifts ($0.068$ cross-dataset). Naive per-prefix recalibration matches it on the matched cell, but inflates the stopping-time rate by $1.7$--$2.8\times$ once the null shifts (Appendix E7-i). The end-calibrated threshold remains the better construction even where the guarantee is lost.
\emph{The chunk size} is a granularity knob. Eight frames is too short for the spectral features, while $16$ and $32$ tie at the full
prefix.

\emph{The feature set} behaves counterintuitively. The four global motion statistics generalize better across generators than the full 13-d feature. The nine spectral dimensions alone collapse out of distribution. The robust signal is thus coarse motion statistics, not fine spectral structure. \emph{The aggregator} costs accuracy. The running max is the weakest offline aggregate we measured (mean-pooling the same codec scores reaches $0.885$). But it is the only monotone one, and the guarantees require monotonicity (Appendix E7-v).

\emph{The motion-bias control} (the shortcut critique of \cite{motionbias}) runs
on all seven generators. Where stage~1 separates real from generated, the signal survives matching on motion magnitude and holds within motion quantiles (cogvideo: raw $0.90$, matched $0.91$, within-bin $0.81$). This is not the ``moves less'' artifact. On the hard generators stage~1 is at or near chance (raw $0.43$--$0.55$), so the control cannot inform. A residual shortcut cannot be excluded there.

\noindent\textbf{Reading the table.} Rows of Table~\ref{tab:results} are comparable only within a block. The subsample is easier (codec
$0.72$ against $0.64$ on the full cell), so comparing across blocks would overstate the VLM.
The escalation comparison that matters is inside the lower block. Deferring to the VLM reaches
$0.79$ against $0.68$ for the pixel CNN at the matched $55\%$ budget.

\section{Limitations}
\label{sec:limits}

The stream model assumes bitstream access and a codec whose motion field is informative. Transcoding is a calibration step, not free. Proposition~\ref{prop:avi}'s control rests on one end threshold calibrated on a held-out real sample. E7 shows it drifts once the test null departs from that sample. Proposition~\ref{prop:cascade} is conditional on stage~2 being
at least as accurate on the deferred set, which we test rather than assume. Motion-cue detectors can ride dataset motion biases
\cite{motionbias}, so we report the E7 control and leave bias-robust features as open work. Our scope is fully-synthetic AIGV, not face-swap calls \cite{activeprobe,realtimedf}. The VLM remains an
external dependency.

\section{Conclusion}
\label{sec:conclusion}

We recast AIGV detection as a streaming problem, with its own metrics and protocol. We then showed that the compressed bitstream supplies a cheap, always-on stage. That stage decides early and escalates only when uncertain. Two results make the trade-offs explicit. Temporal monotonicity turns latency
into a calibrated anytime curve, at a controlled false-alarm rate. The confidence-deferral
guarantee turns escalation cost into a budgeted frontier, monotone exactly where the deferral
condition holds. Detection then costs a codec parse, not a large-model forward pass, and any
monotone stage~1 inherits both guarantees.

\begin{availability}
Code, configurations, and evaluation splits are at
\url{https://github.com/KurbanIntelligenceLab/streamdet}. The released harness reproduces
every number in this paper except the VLM MAC estimate, which is analytic
(\cref{app:compute}). Datasets are public and cited in \cref{sec:exp}.
\end{availability}

\begin{conflicts}
The authors declare no competing interests.
\end{conflicts}

\bibliography{references}

\appendix
\section{Proofs}
\label{app:proofs}
The body states \cref{prop:anytime,prop:avi,prop:cascade} and \cref{cor:budget} and gives the idea of each proof in place.
This section gives the full arguments. Each result is restated first so it can be read without the
body in hand; the statements restate those in the body. \cref{prop:anytime}'s proof is a
single line and is given in the paper itself. We prove \cref{prop:avi,prop:cascade} and \cref{cor:budget} in full,
and isolate one fact used repeatedly, the stochastic monotonicity of the running maximum, as a
lemma.

\begin{lemma}[Stochastic monotonicity of the running maximum]
\label{lem:stochmono}
Let $M_t=\max_{i\le t}s_i$. Then $M_t$ is stochastically non-decreasing in $t$: for every threshold
$x$, $\,\Pr(M_t\ge x)$ is non-decreasing in $t$ under any fixed distribution of the scores.
Consequently, for a fixed level $\alpha$ the upper $(1-\alpha)$-quantile
$\tau_t=\inf\{x:\Pr(M_t\ge x)\le\alpha\}$ is non-decreasing in $t$.
\end{lemma}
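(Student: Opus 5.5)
The plan is to derive the distributional statement from the pathwise monotonicity already established in the proof of \cref{prop:anytime}, and then read off the quantile statement from the resulting ordering of tail functions.

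\textbf{Step 1 (pathwise to stochastic).} On every realization of the score sequence, $M_t=\max(M_{t-1},s_t)\ge M_{t-1}$. For a clip of length $N<t$ we use the freezing convention $M_t:=M_N$ of \cref{prop:anytime}, so the inequality still holds, with equality. Hence for any fixed $x$ the events are nested, $\{M_{t-1}\ge x\}\subseteq\{M_t\ge x\}$. Monotonicity of probability then gives $\Pr(M_{t-1}\ge x)\le\Pr(M_t\ge x)$ under whatever joint law generates the scores. No independence or identical distribution of the $s_i$ is needed, because the argument is a coupling on a common sample space. Induction over $t$ yields the first claim for all $x$ simultaneously.

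\textbf{Step 2 (quantiles).} Write $G_t(x)=\Pr(M_t\ge x)$. By Step 1, $G_t\le G_{t'}$ pointwise whenever $t\le t'$. Define the sets $A_t=\{x: G_t(x)\le\alpha\}$. If $x\in A_{t'}$, then $G_t(x)\le G_{t'}(x)\le\alpha$, so $x\in A_t$. Thus $A_{t'}\subseteq A_t$, and taking infima gives $\tau_t=\inf A_t\le\inf A_{t'}=\tau_{t'}$. This is exactly the claimed non-decrease of $\tau_t$.

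\textbf{Main obstacle: degenerate cases.} The only delicate point is checking that the infimum is well defined. Each $G_t$ is non-increasing in $x$ and tends to $0$ as $x\to\infty$ whenever $M_t$ is a.s.\ finite, which holds since the scores are real valued and $t$ is finite. So $A_t$ is a nonempty up-set and $\tau_t<\infty$. It may be $-\infty$ if $\alpha\ge 1$, and the inclusion argument is unaffected in that case. Atoms of $M_t$ only affect whether the infimum is attained, which the statement does not require. I would remark that this is where the atomless assumption of \cref{prop:avi}(ii) enters: it upgrades $G_t(\tau_t)\le\alpha$ to equality, but it is not needed for the lemma itself.
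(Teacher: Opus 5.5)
Your proof is correct and is the paper's argument made explicit: pathwise $M_t\ge M_{t-1}$ gives nested events and hence first-order dominance, and your inclusion $\{x:\Pr(M_{t'}\ge x)\le\alpha\}\subseteq\{x:\Pr(M_t\ge x)\le\alpha\}$ for $t\le t'$ is the precise form of the paper's ``the infimum can only grow.'' One small correction to your aside, which the lemma does not need: for $0<\alpha<1$, left-continuity of $x\mapsto\Pr(M_t\ge x)$ gives $\Pr(M_t\ge\tau_t)\ge\alpha$ in general, so atomlessness upgrades a $\ge$ (not a $\le$) to equality, and nonemptiness of the sublevel set requires $\alpha>0$ (with $\inf\emptyset=+\infty$ your inclusion argument goes through regardless).
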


\begin{proof}
$M_t=\max(M_{t-1},s_t)\ge M_{t-1}$ pointwise, so $\{M_{t-1}\ge x\}\subseteq\{M_t\ge x\}$ as events
and $\Pr(M_{t-1}\ge x)\le\Pr(M_t\ge x)$ for every $x$; this is first-order stochastic dominance
$M_t\succeq M_{t-1}$. Quantile monotonicity is immediate: since $x\mapsto\Pr(M_t\ge x)$ is
non-increasing and dominated by $x\mapsto\Pr(M_{t+1}\ge x)$, the infimum of
$\{x:\Pr(M_t\ge x)\le\alpha\}$ can only grow with $t$, i.e.\ $\tau_t\le\tau_{t+1}$.
\end{proof}

\begin{proposition*}[Restatement of \cref{prop:avi}]
Let $P_0$ be the real (null) distribution, let $M_t=\max_{i\le t}s_i$, and let the detector raise
``generated'' at the alarm time $\sigma=\inf\{t\le N: M_t\ge\tau\}$ (and ``real'' if no such $t$).
\begin{enumerate}\itemsep1pt
\item[(i)] \textbf{A single end-calibrated threshold is anytime-valid.} Calibrate the threshold
$\tau$ so that the null tail at the final prefix is controlled: $P_0(M_N\ge\tau)\le\alpha$. Such a
$\tau$ exists: the upper $(1-\alpha)$-quantile of $M_N$ when $M_N$ has no atom there (then with
equality), and in general any $\tau$ above it. Then the false-positive rate at the data-dependent alarm time equals
the final-prefix rate and is controlled, $P_0(\sigma\le N)=P_0(M_N\ge\tau)\le\alpha$, and
simultaneously $P_0(M_t\ge\tau)\le\alpha$ for every prefix $t\le N$.
\item[(ii)] \textbf{Per-prefix recalibration is not.} Choosing $\tau_t$ so that
$P_0(M_t\ge\tau_t)=\alpha$ at each prefix (an atomless null, for concreteness) controls the
per-prefix rate but not the alarm-time rate: $P_0(\exists\,t\le N: M_t\ge\tau_t)\ge\alpha$, with
strict inequality iff an early crossing the final test misses has positive probability,
$P_0(\exists\,t<N:M_t\ge\tau_t,\,M_N<\tau_N)>0$: the multiple-looks inflation.
\end{enumerate}
\end{proposition*}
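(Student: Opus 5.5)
The approach is to reduce both parts to the pathwise monotonicity $M_t=\max(M_{t-1},s_t)\ge M_{t-1}$ from \cref{prop:anytime} and \cref{lem:stochmono}, with $N$ treated as fixed (or freezing $M_t:=M_N$ beyond a clip's length, as in \cref{prop:anytime}).

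\textbf{Part (i).} First, existence of $\tau$. The map $x\mapsto P_0(M_N\ge x)$ is non-increasing and left-continuous, and it tends to $0$ as $x\to\infty$. Hence the set $\{x:P_0(M_N\ge x)\le\alpha\}$ is a nonempty up-ray. Its infimum $q$ belongs to the set whenever $M_N$ has no atom at $q$, because then $P_0(M_N\ge q)=P_0(M_N>q)\le\alpha$ by right-limits. Any $\tau>q$ belongs to it in every case. In the atomless case, continuity gives equality.

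Next, the union collapses. Nestedness of $\{M_t\ge\tau\}$ gives
\[
\{\sigma\le N\}=\bigcup_{t\le N}\{M_t\ge\tau\}=\{M_N\ge\tau\}
\]
pathwise, so $P_0(\sigma\le N)=P_0(M_N\ge\tau)\le\alpha$. For each $t\le N$, the inclusion $\{M_t\ge\tau\}\subseteq\{M_N\ge\tau\}$ gives $P_0(M_t\ge\tau)\le\alpha$. This part is routine.

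\textbf{Part (ii).} Take an atomless null, so each $\tau_t$ can be chosen with $P_0(M_t\ge\tau_t)=\alpha$. By \cref{lem:stochmono} we may take the upper quantiles $\tau_1\le\dots\le\tau_N$. Write $E_t=\{M_t\ge\tau_t\}$ and $U=\bigcup_{t\le N}E_t$. Then $U=E_N\,\dot\cup\,(U\setminus E_N)$, which gives
\[
P_0(U)=\alpha+P_0\bigl(\exists\,t<N: M_t\ge\tau_t,\ M_N<\tau_N\bigr)\ge\alpha .
\]
The inequality is strict iff the second term is positive; this is exactly the stated characterization. Here $U\setminus E_N$ is precisely the early-crossing-missed event, since $E_N$ already contains the $t=N$ case.

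The step I expect to need care is making sure the decomposition does not secretly require $\tau_t$ to be monotone, and pinning down the non-uniqueness of quantiles when $P_0(M_t\ge x)$ is flat at level $\alpha$. The identity above holds for \emph{any} choice of $\tau_t$ with $P_0(E_N)=\alpha$, so monotonicity of $\tau_t$ is needed only for the interpretive remark that $M$ can cross a lower early boundary and finish below a higher final one. I would state it that way, invoking \cref{lem:stochmono} only for that remark. I would also fix the convention $\tau_t=\inf\{x:P_0(M_t\ge x)\le\alpha\}$ so that the thresholds are well defined.
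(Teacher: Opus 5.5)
Your proposal is correct and follows the paper's argument. In (i), pathwise monotonicity of $M_t$ collapses $\{\sigma\le N\}$ to $\{M_N\ge\tau\}$, and in (ii), the inclusion $\{M_N\ge\tau_N\}\subseteq\bigcup_{t\le N}\{M_t\ge\tau_t\}$ gives the lower bound. Your exact disjoint decomposition $P_0(U)=\alpha+P_0(U\setminus E_N)$ is slightly sharper than the paper's version: the paper orders the thresholds via \cref{lem:stochmono} (which, as you note, is only interpretive) and loosely describes the excess as a sum of miss events over the looks, when it is the probability of their union. Your existence argument for $\tau$ also fills in a step the paper only asserts.
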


\begin{proof}
By \cref{prop:anytime}, $M_t$ is non-decreasing in $t$, so $M_t\le M_N$ and
$\{\exists\,u\le t: M_u\ge\tau\}=\{M_t\ge\tau\}$. \emph{(i)} The alarm fires by prefix $t$ iff
$M_t\ge\tau$ and by the end iff $M_N\ge\tau$; hence $P_0(\sigma\le N)=P_0(M_N\ge\tau)\le\alpha$ by
the calibration of $\tau$, and $P_0(M_t\ge\tau)\le P_0(M_N\ge\tau)\le\alpha$ for every $t$.
\emph{(ii)} Since $\{M_N\ge\tau_N\}\subseteq\{\exists\,t: M_t\ge\tau_t\}$ and
$P_0(M_N\ge\tau_N)=\alpha$, the union has probability at least $\alpha$. By
Lemma~\ref{lem:stochmono} the marginal thresholds are ordered, $\tau_t\le\tau_N$ for $t\le N$, so on
the event $\{\tau_t\le M_t\le M_N<\tau_N\}$ an early boundary is crossed while the final one is not;
its probability $P_0(M_t\ge\tau_t,\,M_N<\tau_N)$ is a lower bound on the excess the union carries over
$\alpha$ (the full excess sums such events over the looks). The union strictly exceeds $\alpha$
exactly when some such miss event carries positive mass. Necessary for that: $\tau_t<\tau_N$ for
some $t$ (the null genuinely evolves) and null mass in $[\tau_t,\tau_N)$; not sufficient, since the
joint law decides. With perfectly dependent scores ($s_1=\dots=s_N$), for instance, every early
crossing is also a final crossing and the union is exactly $\alpha$.
\end{proof}

\begin{proposition*}[Restatement of \cref{prop:cascade}]
Let stage~1 produce score $s$ with per-GOP compute $C_1$, and let stage~2 have compute $C_2$ and
be invoked exactly when $s\in W$, both at constant unit costs; the cascade decides by stage~2 on
$\{s\in W\}$ and by stage~1's deterministic rule elsewhere. Writing $\mathrm{err}^{A}$ for the \emph{error mass} on a region
$A$ (the probability of being wrong \emph{and} in $A$), the expected per-clip compute and the
cascade error are
\[
\begin{aligned}
\mathbb{E}[C]&=\bar{t}\,C_1+\Pr(s\in W)\,C_2,\\[2pt]
\mathrm{err}_{\mathrm{casc}}&=\mathrm{err}_1-\bigl(\mathrm{err}_1^{W}-\mathrm{err}_2^{W}\bigr),
\end{aligned}
\]
where $\bar t$ is the expected number of GOPs processed and $\mathrm{err}_1$ is stage~1's error if
it decided everywhere.
\end{proposition*}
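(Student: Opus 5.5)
The plan is to prove the two displayed identities separately. Both follow from linearity of expectation and a partition of the sample space by the deferral event $\{s\in W\}$ and its complement $\bar W$. We use nothing beyond the setup of the statement: stage~2 runs exactly on $\{s\in W\}$, unit costs are constant, and the cascade takes stage~1's deterministic rule off $W$.

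\textbf{Compute.} Let $T$ be the random number of GOPs stage~1 processes for a clip; this is the stopping prefix of Algorithm~\ref{alg:stream}. The per-clip cost is the random variable
\[
C=T\,C_1+\mathbf{1}\{s\in W\}\,C_2 .
\]
Stage~1 pays $C_1$ per GOP read, and stage~2 is invoked once, exactly on the deferral event, at cost $C_2$. Taking expectations and using linearity with constant $C_1,C_2$ gives
\[
\mathbb{E}[C]=\mathbb{E}[T]\,C_1+\Pr(s\in W)\,C_2=\bar t\,C_1+\Pr(s\in W)\,C_2 .
\]
Independence of $T$ and the deferral event is not needed, since the two terms are expectations of separate summands. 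If $C_2$ were allowed to depend on the prefix length, one would instead get $\mathbb{E}[C_2(T)\mathbf{1}\{s\in W\}]$. The constant-cost hypothesis is exactly what removes this term, and I would remark on that.

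\textbf{Error.} Let $E_1,E_2,E_{\mathrm{casc}}$ be the events that stage~1, stage~2, and the cascade err. By definition of the cascade,
\[
E_{\mathrm{casc}}=(E_1\cap\{s\notin W\})\cup(E_2\cap\{s\in W\}),
\]
and this union is disjoint. Hence
\[
\mathrm{err}_{\mathrm{casc}}=\mathrm{err}_1^{\bar W}+\mathrm{err}_2^{W}.
\]
Additivity of error mass over the same partition gives $\mathrm{err}_1=\mathrm{err}_1^{\bar W}+\mathrm{err}_1^{W}$. Substituting $\mathrm{err}_1^{\bar W}=\mathrm{err}_1-\mathrm{err}_1^{W}$ yields $\mathrm{err}_{\mathrm{casc}}=\mathrm{err}_1-(\mathrm{err}_1^{W}-\mathrm{err}_2^{W})$. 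The body's consequences then follow at once: $\mathrm{err}_{\mathrm{casc}}\le\mathrm{err}_1$ if and only if $\Delta(W)\ge0$, with strict inequality when $\Delta(W)>0$.

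\textbf{Main point of care.} The algebra is routine; the one subtle point is the meaning of ``stage~1's error if it decided everywhere.'' On $W$, stage~1 must still have a well-defined deterministic decision for $\mathrm{err}_1^{W}$ to make sense. I would fix this as the decision ``real'' (the rule $M_t<\tau$ at the decision point), which is stage~1's verdict on $W$. I would also note that this decision is the same random quantity used in the partition, so that $E_1\cap\{s\notin W\}$ is literally the cascade's error off $W$. Finally, I would check that $s$ is measured at the same data-dependent decision point for both stages, since $W$ is a band below $\tau$ evaluated at that point. Then the event identities hold pointwise and the proof is complete.
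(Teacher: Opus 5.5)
Your proposal is correct and follows the paper's proof: you obtain the compute identity by linearity of expectation over the per-GOP cost and the deferral indicator, and the error identity by splitting the cascade's error over the disjoint events $\{s\in W\}$ and $\{s\notin W\}$ and then subtracting $\mathrm{err}_1=\mathrm{err}_1^{\bar W}+\mathrm{err}_1^{W}$. You also fix stage~1's verdict on $W$ to ``real'', which the paper leaves implicit; this is consistent with its remark that stage~1 sits at $0.50$ balanced accuracy on the band ``by construction''.
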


\begin{proof}
Compute is additive over GOPs processed and over the deferral event: stage~1 runs on every
processed GOP (expected count $\bar t$, cost $C_1$ each) and stage~2 runs iff $s\in W$ (cost
$C_2$), giving the stated expectation by linearity. For accuracy, the cascade equals stage~1 on
the confident set $\{s\notin W\}$ and stage~2 on the deferred set $\{s\in W\}$, so
$\mathrm{err}_{\mathrm{casc}}=\mathrm{err}_1^{\bar W}+\mathrm{err}_2^{W}$. Since
$\mathrm{err}_1=\mathrm{err}_1^{\bar W}+\mathrm{err}_1^{W}$, subtracting gives
$\mathrm{err}_{\mathrm{casc}}=\mathrm{err}_1-(\mathrm{err}_1^{W}-\mathrm{err}_2^{W})$; the
equivalence and strictness follow. The accounting idealizes stage-1 and stage-2 as fixed unit costs
$C_1,C_2$ and folds their early-exit behavior into the expected processed count $\bar t$;
input-dependent costs only rescale the compute axis and leave the accuracy statement untouched.
\end{proof}

\begin{corollary*}[Restatement of \cref{cor:budget}]
For an expected-compute budget $\mathbb{E}[C]=B$ with
$\bar t\,C_1\le B\le \bar t\,C_1+\Pr(s<\tau)\,C_2$ (gate-fired clips are never deferred), a width
$w$ with deferral probability $\Pr(s\in W)=(B-\bar t\,C_1)/C_2$ meets the budget exactly when the
score law is atomless below $\tau$; the sweep leaves $\bar t$ and $\mathrm{err}_1$ unchanged, since
deferral is decided only at the decision point. The achieved operating
point is $(\mathbb{E}[C],\,\mathrm{acc})=(\bar t\,C_1+\Pr(s{\in}W)\,C_2,\;1-\mathrm{err}_1+\Delta(W))$
with $\Delta(W)=\mathrm{err}_1^{W}-\mathrm{err}_2^{W}$. Sweeping the deferral width $w$
($W=W(w)=[\tau-w,\tau)$) traces this frontier, which is monotone, accuracy non-decreasing in the
budget, iff every marginal region added as $W$ widens satisfies the deferral condition
($\mathrm{err}_2\le\mathrm{err}_1$ there); where it fails the frontier bends back.
\end{corollary*}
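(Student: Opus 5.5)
The corollary has four parts: the budget inversion, the invariance of $\bar t$ and $\mathrm{err}_1$, the operating point, and the monotonicity criterion. I would read the first three directly off \cref{prop:cascade} and put the real work into the fourth.

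\textbf{Budget inversion.} By \cref{prop:cascade}, $\mathbb{E}[C]=\bar t\,C_1+\Pr(s\in W)\,C_2$. Setting this equal to $B$ requires $\Pr(s\in W(w))=(B-\bar t C_1)/C_2$. Define $F(w):=\Pr(s\in[\tau-w,\tau))$. Then $F$ is non-decreasing in $w$, with $F(0)=0$ and $F(w)\to\Pr(s<\tau)$ as $w\to\infty$. If the score law is atomless below $\tau$, then $F$ is continuous. The intermediate value theorem then gives a $w$ hitting every target in $[0,\Pr(s<\tau)]$, which is exactly the stated budget range. Clips with $s\ge\tau$ fired the gate and lie outside every $W$, which explains the upper end of the range.

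\textbf{Invariance and operating point.} The stopping behaviour of Algorithm~\ref{alg:stream} depends only on $\tau$ and the latency budget, never on $w$. Hence $\bar t$ and stage~1's decision rule, and so $\mathrm{err}_1$, do not change as $w$ is swept; $w$ only relabels clips at the decision point. Substituting $\mathrm{err}_{\mathrm{casc}}=\mathrm{err}_1-\Delta(W)$ from \cref{prop:cascade} into $\mathrm{acc}=1-\mathrm{err}_{\mathrm{casc}}$ gives the stated operating point.

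\textbf{Monotonicity criterion.} Take $w<w'$. The windows are nested, $W(w)\subseteq W(w')$, and their difference is the marginal region $D=[\tau-w',\tau-w)$. Error mass is additive over disjoint regions, so
\[
\Delta(W(w'))-\Delta(W(w))=\mathrm{err}_1^{D}-\mathrm{err}_2^{D}.
\]
Since compute is non-decreasing in $w$, accuracy is non-decreasing in the budget iff this increment is $\ge0$ for every pair $w<w'$. That is precisely the deferral condition on every marginal region.

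\begin{itemize}
\item \emph{Sufficiency:} if the increment is $\ge0$ on every marginal region, accuracy never decreases as the budget grows.
\item \emph{Necessity:} if some marginal region $D$ has $\mathrm{err}_2^D>\mathrm{err}_1^D$, take the corresponding $w<w'$. Accuracy strictly drops while compute weakly rises, so the frontier bends back.
\end{itemize}

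The main obstacle is the edge case in which $D$ carries zero probability. Compute is then flat and the increment is $0$, so monotonicity survives trivially. I would handle this by indexing the frontier by $w$, or equivalently by noting that a zero-mass region has zero error mass on both sides and so cannot violate the condition.
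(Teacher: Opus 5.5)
Your proposal is correct and follows the paper's proof essentially step for step. Continuity and monotonicity of $w\mapsto\Pr(s\in W(w))$ under the atomless assumption give feasibility, substituting \cref{prop:cascade} gives the operating point, and additivity of error mass over the marginal region $A=[\tau-w',\tau-w)$ gives $\Delta(W(w'))-\Delta(W(w))=\mathrm{err}_1^{A}-\mathrm{err}_2^{A}$, hence the monotonicity criterion. Your handling of zero-mass marginal regions is in fact slightly more careful than the paper, which asserts $B(w')>B(w)$ unconditionally; the one gloss you share with it is that, when the score is unbounded below, the top of the budget range is reached only with $W=(-\infty,\tau)$.
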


\begin{proof}
\emph{Feasibility.} Setting $\mathbb{E}[C]=\bar t\,C_1+\Pr(s\in W)\,C_2$ equal to $B$ and solving
gives $\Pr(s\in W)=(B-\bar t\,C_1)/C_2$, which lies in $[0,\Pr(s<\tau)]$ exactly on the stated
budget range $\bar t\,C_1\le B\le\bar t\,C_1+\Pr(s<\tau)\,C_2$. Because $\Pr(s\in W(w))$ is non-decreasing and (for an atomless
score law) continuous in $w$, ranging from $0$ at $w=0$ up to $\Pr(s<\tau)$, every feasible deferral
probability is realized by some width $w$; the budget is met.

\emph{Accuracy along the frontier.} By \cref{prop:cascade}, $\mathrm{acc}(w)=1-\mathrm{err}_{\mathrm{casc}}
=1-\mathrm{err}_1+\Delta(W(w))$, so the fixed term $1-\mathrm{err}_1$ carries no $w$-dependence and
the entire frontier shape is $\Delta(W(w))$. Take $w'>w$ and write $W(w')=W(w)\,\dot\cup\,A$ with
$A=[\tau-w',\tau-w)$ the marginal region added, disjoint from $W(w)$. Error mass is additive over
disjoint regions, $\mathrm{err}_k^{W(w')}=\mathrm{err}_k^{W(w)}+\mathrm{err}_k^{A}$ for
$k\in\{1,2\}$, so
\[
\Delta(W(w'))-\Delta(W(w))=\mathrm{err}_1^{A}-\mathrm{err}_2^{A}.
\]
Widening the window also raises the budget, $B(w')>B(w)$, since $\Pr(s\in W)$ grows. Hence as the
budget increases across $A$ the achieved accuracy changes by exactly $\mathrm{err}_1^{A}-\mathrm{err}_2^{A}$,
the deferral gain on $A$. Therefore accuracy is non-decreasing in the budget iff
$\mathrm{err}_2^{A}\le\mathrm{err}_1^{A}$ for every marginal region $A$, i.e.\ $\mathrm{err}_2\le\mathrm{err}_1$
restricted to $A$; where instead $\mathrm{err}_2^{A}>\mathrm{err}_1^{A}$ on some added region, accuracy
strictly decreases there and the frontier bends back, escalation being spent where stage~1 was
already at least as accurate. In differential form, with the added sliver $[\tau-w-dw,\tau-w)$,
$\tfrac{d}{dw}\,\mathrm{acc}(w)$ has the sign of the local gain $(\mathrm{err}_1-\mathrm{err}_2)$ on
that sliver.
\end{proof}

\section{The streaming pipeline}
\label{app:pipeline}
\Cref{sec:method} specifies this system and \cref{alg:stream} states it
formally: the always-on CPU lane, the gate, and the rare escalation to a GPU stage~2.

\paragraph{Chunking versus encoder GOPs} The chunk is the detector's own fixed window ($16$
frames by default), not the encoder's segmentation. The distinction matters in the wild, where a
clip's GOP structure can be far coarser than the decision granularity one wants: our benchmark
clips carry a single I-frame each, one encoder GOP per clip. The detector therefore windows the
arriving frames itself; under the canonical closed-GOP-$16$ re-encode the two coincide, which is
why we say ``GOP'' for the chunk.

\section{Ablation detail (E7)}
\label{app:ablations}
Table~\ref{tab:ablations} gives the numbers behind the chunk-size and feature-set ablations
summarized in \cref{sec:exp}.

\begin{table}[h]
\centering
\setlength{\tabcolsep}{6pt}
\renewcommand{\arraystretch}{1.22}

\small
\setlength{\tabcolsep}{5pt}
\renewcommand{\arraystretch}{1.20}
\begin{tabularx}{\textwidth}{@{}lYcc@{}}
\toprule
\headrow
Ablation & Setting & AUC@$N$ & $\sAUC(1)$ \\
\midrule
\multirow{3}{*}{Chunk size} & $8$ frames & $0.53$ & $0.73$ \\
 & $16$ frames (default) & $0.64$ & $0.80$ \\
 & $32$ frames & $0.64$ & $\mathbf{0.84}$ \\
\midrule
\multirow{3}{*}{Feature set} & Tier-0 only (4 global motion stats) & $\mathbf{0.66}$ & n/a \\
 & Tier-1 only (9 spectral) & $0.31$ & n/a \\
 & All 13-d (default) & $0.64$ & $0.80$ \\
\bottomrule
\end{tabularx}
\caption{Chunk-size and feature-set ablations (matched 27k cell, leave-one-generator-out);
the adopted default in each group is marked \emph{(default)}; best per column in bold.
An $8$-frame window starves the spectral estimators; $32$ frames buys the best first-GOP score at a
coarser decision granularity. Tier-0 alone \emph{beating} the full feature, and Tier-1 alone falling
below chance out of distribution, together say the cross-generator signal lives in motion magnitude
rather than fine spectral structure. The spectral dimensions appear to fit generator-specific
detail that does not transfer.}
\label{tab:ablations}
\end{table}

\paragraph{Gate (E7-i)} Stopping-time false-positive rate at $\alpha{=}0.05$, end-calibrated
threshold versus the naive per-prefix recalibration foil, per cell: matched $0.039$ vs $0.039$;
long-form $0.041$ vs $0.069$; pixel $0.058$ vs $0.104$; AIGVDBench $0.068$ vs $0.191$. The
end-calibrated gate is at or near target on the two cells whose calibration null matches their test
null, and drifts above it under distribution shift, a regime \cref{prop:avi} does not
cover, since its guarantee is stated against the null it was calibrated on. The foil is worse on
every cell, and its inflation grows with the number of effective looks, exactly as
\cref{prop:avi}(ii) predicts.

\paragraph{Motion-bias control (E7-iv)} The control was run on all seven generators; per-fold
numbers below (raw AUC; AUC of mean motion magnitude alone; AUC after matching real and generated
on motion magnitude; motion-quantile within-bin AUC). On the separable generator (cogvideo)
matching does not degrade the score ($0.898$ raw vs $0.911$ matched) and the within-bin signal
($0.810$) far exceeds what motion magnitude alone provides ($0.663$), so the detector is not
reducible to the ``generated video moves less'' artifact of \cite{motionbias} there. On the hard
generators the control runs but cannot inform: stage~1 is at or near chance to begin with, so no
signal remains whose provenance the control could test.

\begin{table}[t]
\centering
\small
\setlength{\tabcolsep}{5pt}
\renewcommand{\arraystretch}{1.20}
\begin{tabularx}{\textwidth}{@{}Ycccc@{}}
\toprule
generator & raw & motion-only & matched & within-bin \\
\midrule
cogvideo & $0.898$ & $0.663$ & $0.911$ & $0.810$ \\
vc2      & $0.689$ & $0.522$ & $0.721$ & $0.686$ \\
ms       & $0.689$ & $0.393$ & $0.673$ & $0.680$ \\
svd      & $0.632$ & $0.625$ & $0.668$ & $0.626$ \\
mora     & $0.551$ & $0.621$ & $0.579$ & $0.558$ \\
pika     & $0.523$ & $0.211$ & $0.451$ & $0.482$ \\
musev    & $0.432$ & $0.316$ & $0.401$ & $0.492$ \\
\bottomrule
\end{tabularx}
\caption{Motion-bias control, per held-out generator. \emph{raw}: stage-1 AUC@$N$ as measured.
\emph{motion-only}: a control scored from coarse motion statistics alone. \emph{matched}: real and
generated clips matched on motion magnitude. \emph{within-bin}: scored inside motion quantiles.}
\label{tab:motionbias}
\end{table}

\paragraph{Aggregator (E7-v)} Why the running \emph{max}? Offline, at the clip's full length, it
is the weakest aggregate of the three we measured: fold-mean AUC $0.641$ (max) vs $0.885$ (mean
over GOPs) vs $0.850$ (last GOP) for the codec stage, and $0.759$ vs $0.869$ vs $0.863$ for the
pixel stage. But mean and last are not monotone in the prefix, so neither admits an anytime
decision rule: the monotonicity that \cref{prop:anytime} establishes for the max is lost, and with it
the single-threshold stopping-time guarantee of \cref{prop:avi} (a mean can rise and fall, so early
crossings are not nested and the multiple-looks inflation returns). The max is the price of anytime validity, and the deferral band is how the
cascade buys some of the gap back. The mean-pooled numbers also show the codec substrate itself
is rich: pooled offline it \emph{exceeds} the per-chunk pixel CNN ($0.885$ vs $0.869$).

\paragraph{Learned deferral rule} A logistic rule trained (within-test, 5-fold) to predict which
segments stage~2 would correct, versus the confidence band, as (band / learned) accuracy at matched
deferral budgets: $0.715/0.713$ at $2\%$, $0.718/0.717$ at $10\%$, $0.719/0.720$ at $15\%$, and
$0.717/0.723$ at $30\%$. The band is the better selector at the small budgets the system actually
operates at, and is overtaken only past ${\approx}15\%$, once the budget is large enough for a
learned rule to exploit structure beyond distance-to-threshold.

\section{Per-generator results}
\label{app:pergen}
The headline leave-one-generator-out numbers are means over seven held-out generators, and that mean
hides a wide spread that matters for reading the cascade results. For the codec stage on the matched
cell, per-generator AUC@$N$ is: cogvideo $0.90$, vc2 $0.70$, ms $0.70$, svd $0.65$, mora $0.56$,
pika $0.52$, musev $0.45$ (mean $0.64$, fold s.d.\ $0.14$). The stage is genuinely strong on one
generator, useful on three, and at or near chance on three. This is the spread that makes
escalation worth its compute: the deferred band is populated overwhelmingly by the hard generators,
which is why stage~2's accuracy on that band (E4) is the quantity that moves the frontier, not
stage~1's mean.

\section{Streaming-perception metric, adapted}
\label{app:metric}
Streaming perception evaluates a detector against the state of the world at the time its output
is emitted, folding latency into accuracy \cite{streamyolo}. We adapt this to a binary
authenticity decision: rather than matching to a moving ground truth, we evaluate the running
score $M_t$ against the (static) clip label at each prefix $t$, and report accuracy as a function
of the latency $\ell_t$ at which a decision is emitted. The latency-budgeted $\sAUC(B)$ of
\cref{def:metrics} is the accuracy reachable within budget $B$; the anytime curve is
its trace over budgets. This reduces to standard offline AUC at $t=N$ with unbounded budget,
which is the regime E1 checks.

\section{Anytime-valid inference and Ville's inequality}
\label{app:avi}
A test is \emph{anytime-valid} at level $\alpha$ if its false-positive probability is at most
$\alpha$ at every sample size and at any data-dependent stopping time \cite{savi,evalues}. The
standard route is a nonnegative supermartingale (an e-process) $E_t$ with $E_0\le 1$ under the
null; Ville's inequality gives $P_0(\sup_t E_t\ge 1/\alpha)\le\alpha$, so stopping the first time
$E_t\ge 1/\alpha$ controls the false-positive rate uniformly in time, the sequential
generalization of Wald's probability ratio test \cite{wald1945}. Confidence sequences are the dual
construction for estimation. \cref{prop:avi} is the special case in which the test
statistic is the running maximum $M_t$: because $M_t$ is monotone, the time-uniform bound follows
directly from a single quantile of $M_N$ without constructing a martingale, and the naive
per-prefix boundary is exactly the inadmissible ``continuous-monitoring'' rule these tools are
designed to replace. For a non-monotone streaming statistic (for example a windowed mean of
per-GOP scores), one recovers the same guarantee by instantiating $E_t$ as a test martingale and
applying Ville's inequality.

\cref{prop:avi} controls the false-positive rate at the stopping time. A constant false-positive
\emph{profile} across prefixes is a stronger demand: it needs a genuinely anytime-valid boundary
(a confidence sequence or e-process as above), which is more conservative than the marginal
per-prefix quantile that E7-i evaluates.

\section{Confidence-based deferral}
\label{app:defer}
The accuracy condition in \cref{prop:cascade} is the standard cascade-deferral
condition: a confident first stage should defer exactly the inputs on which a more capable second
stage is expected to be more accurate \cite{cascadedefer,earlyexitcascade}. Confidence-based
gating (defer when the stage-1 score lies in a band around the decision boundary) is a common,
well-behaved instance \cite{earlyexitcascade}; learned deferral rules can improve it and are
ablated in E7. We do not claim a new deferral method; we use the existing one and specialize its
compute and accuracy accounting to the streaming two-stage detector.

\section{Additional related work}
\label{app:related}
\paragraph{Provenance and watermarking} A complementary line labels synthetic media at the
source: signed provenance \cite{c2pa}, watermarks \cite{synthid}, encouraged by regulation
\cite{euaiact}. Both are cooperative-source solutions that transcoding strips and most
in-the-wild video never carried, so absence of a signal is not evidence of authenticity. Passive
detection remains the safety net; the codec signal we read is intrinsic to the very
re-compression that strips provenance.

\section{External-baseline configuration}
\label{app:baselines}
Two current external detectors anchor the compute-accuracy plane in \cref{tab:results}; both
were run under the audited protocol rather than quoted from their papers.

\paragraph{ReStraV (trained, matched protocol)} We port the perceptual-straightening recipe of
Intern\`o et al.\ verbatim: $24$ frames sampled over a $2$-second window centred on the clip, at
$224{\times}224$, embedded with a frozen DINOv2 ViT-S/14 (public \texttt{torch.hub} weights); the
embedding trajectory is summarised by successive step distances, turning angles, and their
fourth-moment statistics into the paper's $21$-d descriptor. The readout is the same
leave-one-generator-out $L_2$-regularised logistic regression as stage-1's --- identical folds,
seed, and calibration carve-out --- so its row in \cref{tab:results} differs from ours only in the feature,
not the protocol. Its MAC figure (${\sim}1.4{\times}10^{11}$ per clip) is the DINOv2-S forward at
$224^2$ over $24$ frames; the geometry summary and linear readout are negligible against it.

\paragraph{VideoVeritas (zero-shot)} We run the authors' released Qwen3-VL-8B checkpoint
(ModelScope weights) with the inference prompt reproduced verbatim from their repository, greedy
decoding, at most $1{,}024$ new tokens. $16$ frames are sampled uniformly across the clip at
native aspect ratio with the long side capped at $896$\,px ($32$ vision patches of $28$\,px;
uncapped $720$p-and-above sources exceed a $32$\,GB GPU in a single allocation). The soft
score is the softmax mass on the model's final real-vs-fake verdict token, falling back to the
parsed \texttt{<answer>} verdict when that token cannot be located; the hard verdict thresholds
this score at $0.5$. No fine-tuning, recalibration, or prompt search.

Scored on the same $2{,}127$-clip population as the Ivy-VLM arm, it reaches AUC $0.831$
$[0.815,0.845]$ and balanced accuracy $0.819$ $[0.804,0.833]$, at a median $22.9$\,s per clip.
Per-generator accuracy is sharply bimodal: cogvideo $0.92$, pika $0.95$, vc2 $0.94$, and the real
sources hd\_vg $0.96$ and vript $0.98$, against svd $0.15$, musev $0.38$ and ms $0.45$. Mean score
tracks accuracy almost exactly on every generator, so on the three hard generators the model is
not uncertain but confidently wrong.

\paragraph{Excluded: real-time face-deepfake detectors} These \cite{activeprobe,realtimedf} are
not run as comparison baselines. They authenticate faces manipulated in live calls, not
fully-synthetic scenes, so they target a different signal and would be evaluated on a different
test distribution (face crops rather than whole clips). They are complementary to streaming AIGV
detection, which is how we position them.

\section{Compute accounting and hardware}
\label{app:compute}
We report two cost axes. \emph{Compute} is multiply-accumulate operations (MACs) per chunk,
$\mathbb{E}[C]=\bar t\,C_1+\Pr(s\in W)\,C_2$ as in \cref{prop:cascade}, with $C_1$ the
per-GOP parse-and-score cost and $C_2$ a single stage-2 inference. Measured on the matched cell,
$C_1\approx10^{5}$ MACs (the codec parse plus a $13$-d linear score) and $C_2\approx1.8{\times}10^{10}$
MACs (a CLIP ViT-B/32 forward over four sampled frames); because $C_1\ll C_2$ and $\Pr(s\in W)$ is
small by design, the expected compute is dominated by the rare stage-2 calls, which is what the
compute-accuracy frontier (E3) quantifies. This axis is hardware-independent and is where the codec
stage's advantage is unambiguous. \emph{Wall-clock} is more implementation-dependent: the
per-chunk parse-and-score is $\approx\!33$\,ms on CPU and one stage-2 forward is $\approx\!28$\,ms on
an A100, but both are preceded by a frame decode ($\approx\!0.4$\,s per GOP for the codec stage's
all-frame decode, less for the four-frame pixel sample). That decode is an artifact of extracting
motion vectors through the decoder's side-data export in our reference implementation; a bitstream
motion-vector parser reads the same field without pixel reconstruction, and a deployed platform
already decodes the stream for playback and moderation, so we treat the decode as a shared cost and
report the $10^{5}$-MAC parse as the compute the codec stage \emph{adds}.

\paragraph{Hardware} Runs were distributed across two university clusters, heterogeneous by
design: one carries Intel Xeon Gold 6248 nodes ($2{\times}20$ cores, $2.50$\,GHz) with NVIDIA
V100-PCIE-32\,GB GPUs under Red Hat Enterprise Linux 8.10 (kernel 4.18.0-553); the other AMD EPYC
7713 nodes (64 cores) with NVIDIA A100-SXM4-40\,GB under SUSE Linux Enterprise 15 SP6 (kernel
6.4.0). Both wall-clock figures in this section were measured in one job on an A100 node, so the
$\approx\!33$\,ms parse-and-score timing is against the AMD EPYC 7713. One software environment is
shared by both clusters: Python 3.12.13, PyTorch 2.13.0+cu126, NumPy 2.5.1, scikit-learn 1.9.0,
pandas 3.0.3, and PyAV 18.0.0 (FFmpeg libavcodec 62.28.102, the decoder whose side-data export
provides the motion vectors). The workload nonetheless fits a single consumer GPU: stage~1 is
CPU-only, the stage-2 CLIP ViT-B/32 forward needs under $2$\,GB, and the $3$B VLM upgrade fits in
under $8$\,GB, so the core results run within a $12$\,GB budget. Stage~2 is invoked on only the
deferred fraction ($15\%$ at our operating point), so the GPU is lightly loaded.

\end{document}

%% file: affiliations.tex
\DeclareAffiliation{hbku}{%
  College of Science and Engineering, Hamad Bin Khalifa University, Doha, Qatar}

\DeclareAffiliation{tamu}{%
  Department of Computer and Electrical Engineering,
  Texas A\&M University, College Station, TX, USA}

\DeclareAffiliation{iub}{%
  Luddy School of Informatics, Computing, and Engineering,
  Indiana University Bloomington, Bloomington, IN, USA}
